
\documentclass{article}

\usepackage{microtype}
\usepackage{graphicx}
\usepackage{subcaption}
\usepackage{booktabs} 

\usepackage{hyperref}



 \usepackage[preprint]{icml2026}


\usepackage{amsmath}
\usepackage{amssymb}
\usepackage{mathtools}
\usepackage{amsthm}
\usepackage{todonotes}

\usepackage[capitalize,noabbrev]{cleveref}

\theoremstyle{plain}
\newtheorem{example}{Example}[section]
\newtheorem{theorem}{Theorem}[section]
\newtheorem{proposition}[theorem]{Proposition}
\newtheorem{lemma}[theorem]{Lemma}

\theoremstyle{definition}
\newtheorem{definition}[theorem]{Definition}

\theoremstyle{remark}

\newcommand{\pref}{\textit{Pref}}


\icmltitlerunning{Language Generation in the Limit: 
  Complexity Barriers and Implications for Learning}

\begin{document}

\twocolumn[
  \icmltitle{Language Generation in the Limit: \\ 
  Complexity Barriers and Implications for Learning}



  \icmlsetsymbol{equal}{*}

  \begin{icmlauthorlist}
    \icmlauthor{Marcelo Arenas}{dcc,imfd}
    \icmlauthor{Pablo Barcel\'{o}}{imfd,imc,cenia}
    \icmlauthor{Luis Cofr\'{e}}{mat}
    \icmlauthor{Alexander Kozachinskiy}{cenia}
  \end{icmlauthorlist}

  \icmlaffiliation{dcc}{Department of Computer Science, Pontifical Catholic University of Chile}
     \icmlaffiliation{imfd}{Instituto Milenio Fundamentos de los Datos}
        \icmlaffiliation{imc}{Institute for Mathematical and Computation Engineering, Pontifical Catholic University of Chile}
   	 \icmlaffiliation{mat}{Faculty of Mathematics, Pontifical Catholic University of Chile}
                \icmlaffiliation{cenia}{National Center for Artificial Intelligence of Chile}

  \icmlcorrespondingauthor{Marcelo Aranas}{marenas@uc.cl}
  \icmlcorrespondingauthor{Pablo Barcel\'{o}}{pbarcelo@uc.cl}
  \icmlcorrespondingauthor{Luis Cofr\'{e}}{luis.cofr@uc.cl}
  \icmlcorrespondingauthor{Alexander Kozachinskiy}{alexander.kozachinskyi@cenia.cl}

  \vskip 0.3in
]



\printAffiliationsAndNotice{}  

\begin{abstract}
  Kleinberg and Mullainathan showed that language generation in the limit is always possible
at the level of computability: given enough positive examples, a learner can eventually
generate data indistinguishable from a target language. However, such existence results do
not address feasibility.
We study the sample complexity of language generation in the limit for several canonical
classes of formal languages. Our results show that infeasibility already appears for
context-free and regular languages, and persists even for strict subclasses such as locally threshold
testable languages, as well as for incomparable classes such as non-erasing pattern
languages, a well-studied class in the theory of language identification.
Overall, our results establish a clear gap between the theoretical possibility of
language generation in the limit and its computational feasibility.
\end{abstract}

\section{Introduction}

\paragraph{Context.}
The study of language identification in the limit originates in the seminal work of
\citet{DBLP:journals/iandc/Gold67}. Informally, a collection $\mathcal{L}$ of languages is
said to be \emph{identifiable in the limit} if there exists a learning algorithm that,
given an infinite stream of positive examples from some target language
$L \in \mathcal{L}$, produces a sequence of hypotheses that eventually stabilizes on a
correct description of $L$. Gold showed that this notion is highly restrictive: even the
class of regular languages is not identifiable in the limit from positive data alone,
whereas finite families trivially are. Angluin subsequently refined this negative picture
by providing a structural characterization of families that are identifiable in the
limit~\cite{DBLP:journals/iandc/Angluin80}. Despite these refinements, few natural language
families satisfy these conditions. As a result, Gold--Angluin style unlearnability results
have often been interpreted as formal support for the \emph{poverty of the stimulus}
hypothesis in linguistics, suggesting that purely data-driven learning is insufficient
without strong innate biases or prior structure
\cite{pinker2013learnability,Pearl2021PoS}.

A notable exception to this pessimistic landscape is provided by \emph{non-erasing pattern
languages}~\cite{DBLP:journals/jcss/Angluin80}. These languages are generated by words over
constants and variables, where each variable must be instantiated by a nonempty word.
Pattern languages are expressive enough to capture nontrivial structural regularities,
yet sufficiently constrained to form a canonical example of a nontrivial family that is
learnable in Gold’s model~\cite{DBLP:journals/iandc/Angluin80}.

The recent empirical success of large language models (LLMs) contrasts with these
classical learnability limitations. Although trained on finite samples and exposed
primarily to positive data, LLMs appear to internalize a wide range of syntactic and
semantic regularities, suggesting that useful generalization may be possible without
explicitly identifying an underlying grammar. Motivated by this tension, Kleinberg and
Mullainathan introduced the notion of \emph{language generation in the
limit}~\cite{kleinberg2024language}, which shifts the focus from identifying a language to
generating strings of a target language, unseen in the training data. Unlike
identification, language generation in the limit is always possible at the level for countable families of languages. This establishes a
fundamental asymmetry between learning and generation and motivates closer study of
generation as a lens for understanding generalization. 

\paragraph{Problem.}
While language generation in the limit is computable in full generality, this fact alone
says little about \emph{feasibility}. From a computational standpoint, the key question is
how many examples are required before a generator can reliably produce strings
indistinguishable from those of a target language. If this sample complexity grows too
rapidly with the size or structure of the language family, then generation in the limit
remains a purely existential result.

\paragraph{Our results.}
We study the sample complexity of language generation in the limit for finite families of
languages drawn from several canonical language classes. While non-computable bounds are
easy to obtain for highly pathological families, we show that severe feasibility barriers
already arise for central and well-studied classes.
\begin{itemize}
\item {\em Context-free languages:}
We show that there is no computable bound on the number of examples required to guarantee
successful generation in the limit, even for families specified by pushdown automata or
context-free grammars. Thus, non-computability already arises for a canonical and
extensively studied class.

\item {\em Regular languages:}
Restricting attention to regular languages restores computability, but at a high cost. For
finite families of regular languages represented by (deterministic) finite automata or
regular expressions, any generator requires a double-exponential number of examples in
the size of the family.

\item {\em LTT languages:}
We next consider \emph{locally threshold testable} (LTT) languages, a robust and
well-studied subclass of the regular languages. Informally, LTT languages are determined
by local counting conditions on short substrings and admit equivalent combinatorial,
logical, and algebraic characterizations. For this class, the sample complexity of
generation in the limit is precisely single exponential. While this improves over the
double-exponential bound for arbitrary regular languages, the resulting complexity
remains infeasible.

\item {\em Patterns:}
Finally, we study the previously mentioned non-erasing pattern languages, which are
incomparable with the regular and the LTT languages. Despite their favorable properties for
identification in the limit, we establish an exponential lower bound on the sample
complexity of generation over finite families.
\end{itemize}

\section{Language generation}

\paragraph{Generation in the limit.}
Let $\Sigma$ be a finite alphabet and let $\Sigma^*$ denote the set of all finite words
over $\Sigma$. A \emph{language} over $\Sigma$ is a subset of $\Sigma^*$.
A \emph{generator} is a function $G$ mapping a finite set of words from $\Sigma^*$ to a
word in $\Sigma^*$.

\begin{definition}[Language generation in the limit]
A set $\mathcal{F}$ of infinite languages over $\Sigma$ is \emph{generatable in the limit}
if there is a generator $G$ such that, for every $L \in \mathcal{F}$ and  ordering
$w_1,w_2,\ldots$ of the words in $L$, there is $m \ge 1$ with
$G(\{w_1,\ldots,w_n\}) \in L \setminus \{w_1,\ldots,w_n\}$ for all $n \ge m$.
\qed
\end{definition}

Language generation in the limit was introduced by
\citet{kleinberg2024language}, who showed that every countable family of languages is
generatable in the limit. However, this notion is inherently non-quantitative: the point
of convergence may depend arbitrarily on the target language and on the ordering of its
examples, and therefore does not yield a meaningful notion of sample complexity.

\paragraph{Uniform generation.}
To reason about feasibility, one needs guarantees that hold after a fixed number of
examples, uniformly across the family. This motivates the notion of
\emph{uniform generation}, also introduced in \cite{kleinberg2024language}. In particular,
they observe that every finite family of languages admits such a uniform bound, making
uniform generation the appropriate setting for studying generation from finite collections
of languages.

\begin{definition}[Uniform generation]
Let $m \ge 1$. A set $\mathcal{F}$ of infinite languages is \emph{$m$-generatable} if there
exists a generator $G$ such that, for every $L \in \mathcal{F}$ and every finite
$X \subseteq L$ with $|X| \ge m$, it holds that $G(X) \in L \setminus X$.

Such a generator $G$ is called an \emph{$m$-generator} for $\mathcal{F}$.
\qed
\end{definition}

Families that are $m$-generatable for some $m \in \mathbb{N}$ were characterized by
\citet{li2024generation}. We recall a streamlined version of this characterization,
adapted to our setting.

\begin{proposition}
\label{prop_char}
A set $\mathcal{F}$ of infinite languages is not $m$-generatable if and only if there
exists a non-empty $\mathcal{S} \subseteq \mathcal{F}$ such that
$\bigcap_{L \in \mathcal{S}} L$ is finite and has size at least $m$.
\end{proposition}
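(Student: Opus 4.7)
The plan is to prove the two directions of the equivalence separately, both hinging on the fact that a generator must always produce an \emph{infinite} output, combined with a canonical choice of $G$.

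For the ``if'' direction, suppose $\mathcal{S}\subseteq\mathcal{F}$ is non-empty with $I:=\bigcap_{L\in\mathcal{S}}L$ finite and of size at least $m$. I would assume for contradiction that a generator $G$ witnesses $m$-generatability of $\mathcal{F}$, and then pick $m$ distinct words $w_1,\ldots,w_m\in I$. For every $L\in\mathcal{S}$ these $w_i$ lie in $L$, so the definition forces $G(w_1,\ldots,w_m)$ to be an infinite subset of $L$. Intersecting this membership over all $L\in\mathcal{S}$ yields $G(w_1,\ldots,w_m)\subseteq I$, which is finite, contradicting infinitude of the output.

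For the ``only if'' direction, I would prove the contrapositive by exhibiting an explicit $m$-generator under the assumption that every non-empty $\mathcal{S}\subseteq\mathcal{F}$ has $\bigcap_{L\in\mathcal{S}}L$ either infinite or of size strictly less than $m$. The natural candidate is the canonical generator
\[
G(w_1,\ldots,w_n) \;:=\; \bigcap\bigl\{L\in\mathcal{F} : \{w_1,\ldots,w_n\}\subseteq L\bigr\},
\]
with the convention that an empty intersection (no $L\in\mathcal{F}$ contains every $w_i$) is replaced by some fixed infinite language, so that $G$ always outputs an infinite set. To verify the condition, fix $L\in\mathcal{F}$ together with distinct $w_1,\ldots,w_m\in L$ and any extension $w_{m+1},\ldots,w_n\in L$ with $n\geq m$. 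The family $\mathcal{S}$ of languages in $\mathcal{F}$ containing $\{w_1,\ldots,w_n\}$ is non-empty because $L\in\mathcal{S}$, and its intersection includes the $m$ distinct words $w_1,\ldots,w_m$. By the working hypothesis, this intersection cannot be finite of size less than $m$, so it must be infinite; since it is contained in $L$, the value $G(w_1,\ldots,w_n)$ is the required infinite subset of $L$.

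The only delicate point I foresee is fixing the intended reading of the definition of uniform generation when $n>m$, namely that $w_{m+1},\ldots,w_n$ are interpreted as further (possibly repeated) words from $L$; once this convention is set, both arguments are short, and the canonical generator above is essentially the tightest possible, so the characterization reduces cleanly to the combinatorial question of how large a finite intersection of members of $\mathcal{F}$ can be.
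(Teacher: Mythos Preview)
Your proof is correct and follows essentially the same route as the paper: the ``if'' direction via the same contradiction on $G(w_1,\ldots,w_m)\subseteq I$, and the ``only if'' direction via the same canonical intersection generator. You are in fact slightly more careful than the paper about the $n>m$ extension and the vacuous case where no $L\in\mathcal{F}$ contains all the $w_i$, but these are cosmetic refinements of the identical argument.
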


\begin{proof}
Suppose there exists a non-empty $\mathcal{S}\subseteq \mathcal{F}$ such that
$X = \bigcap_{L\in\mathcal{S}} L$ is finite and $|X|\ge m$.
Assume for contradiction that an $m$-generator $G$ exists.
If $G(X)\in X$, then for any $L\in\mathcal{S}$ we have $G(X)\in L$, contradicting
$G(X)\in L\setminus X$.
If $G(X)\notin X$, then $G(X)\notin L$ for some $L\in\mathcal{S}$, again a contradiction.

Conversely, assume no such $\mathcal{S}$ exists.
For a finite set $X\subseteq\Sigma^*$, define
$\mathcal{S}_X=\{L\in\mathcal{F}\mid X\subseteq L\}$ and
$A_X=\bigcap_{L\in\mathcal{S}_X}L$.
Define $G(X)$ to be any element of $A_X\setminus X$ when $\mathcal{S}_X$ is non-empty.
If $X\subseteq L'$ with $|X|\ge m$, then $A_X\subseteq L'$ has size at least $m$ and,
by assumption, must be infinite; hence $A_X\setminus X\neq\emptyset$ and
$G(X)\in L'\setminus X$.
\end{proof}

When $\mathcal{F}$ is finite, it is necessarily $m$-generatable for some $m$, since only
finitely many intersections $\bigcap_{L\in\mathcal{S}}L$ can arise.
However, this observation concerns existence only.
To study feasibility, one must quantify how large $m$ must be as a function of the size
of $\mathcal{F}$ and the descriptive complexity of its languages.
Accordingly, in the remainder of the paper we focus on uniform generation for finite
families and analyze the sample complexity required for generation in the limit across
several canonical language classes.

\paragraph{Related work.}
Before turning to our technical results, we briefly review related work on language
generation. \citet{li2024generation} introduce the notion of
\emph{non-uniform generation}, where the number of examples required for generation may
depend on the target language, but not on the order in which examples are presented.
They distinguish this notion from uniform generation and generation in the limit, and
relate it to classical learning frameworks such as PAC and online learning.
Subsequently, \citet{DBLP:journals/corr/abs-2506-18642} study closure properties of these
generation notions under language union.

A different line of work focuses on the \emph{breadth} of generation, that is, on how
representative the generated subset is of the target language.
\citet{DBLP:conf/stoc/KalavasisMV25} show that one cannot, in general, converge to
generating all unseen elements of the target language, even under a statistical (rather
than adversarial) model of example presentation.
On the positive side, \citet{DBLP:journals/corr/abs-2504-14370} demonstrate that it is
possible to generate a subset of the target language with positive density.
Finally, \citet{DBLP:journals/corr/abs-2505-21819} propose a notion of
\emph{representative} language generation inspired by algorithmic fairness, requiring
that the relative proportions of different segments of the language in the generated
output approximately match those observed in the data.

These works address complementary aspects of language generation, such as robustness,
coverage, and representativeness. In contrast, our focus is on the \emph{sample
complexity} of uniform generation, and on understanding how it depends on the structural
properties of the underlying language class.

\section{Results for context-free languages}

We begin with context-free languages, a canonical and extensively studied class in formal
language theory. Our objective is to understand how many examples are required to generate
from an unknown context-free language drawn from a finite family. We show that, even in
this restricted setting, there is no computable bound on the number of examples sufficient
for uniform generation, already for families consisting of only two infinite languages.

Throughout this section, we assume familiarity with Turing machines and the
undecidability of the halting problem.

\subsection{Context-free grammars}

Context-free grammars are a standard and widely used formalism for specifying
context-free languages. We briefly recall their definition 
A \emph{context-free grammar} (CFG) is a tuple
$G = (\mathcal{X}, \Sigma, R, S)$,
where:
\begin{itemize}
\item $\mathcal{X}$ is a finite set of \emph{variables}, denoted by
$X,Y,\ldots$. 
\item $\Sigma$ is a finite alphabet of \emph{constants}, disjoint from $\mathcal{X}$ and denoted by $a,b,\ldots$. 
\item $R \subseteq \mathcal{X} \times (\mathcal{X} \cup \Sigma)^*$ is a finite set of
{\em production rules}. 
\item $S \in \mathcal{X}$ is the {\em start} variable.
\end{itemize}

We define the \emph{one-step derivation relation} $\Rightarrow_G \subseteq
(\mathcal{X} \cup \Sigma)^* \times (\mathcal{X} \cup \Sigma)^*$ as follows. For words
$u,v \in (\mathcal{X} \cup \Sigma)^*$, we write $u \Rightarrow_G v$ if there exist
words $x,y \in (\mathcal{X} \cup \Sigma)^*$, a variable $X \in \mathcal{X}$, and a rule
$(X \to \alpha) \in R$ such that
\[
u = x X y \quad\text{and}\quad v = x \alpha y.
\]

The relation $\Rightarrow_G^*$ denotes the reflexive and transitive closure of
$\Rightarrow_G$. A word $w \in (\mathcal{X} \cup \Sigma)^*$ is said to be \emph{derived}
from a variable $X \in \mathcal{X}$ if $X \Rightarrow_G^* w$.

The language generated by $G$ is
\[
L(G) \;=\; \{\, w \in \Sigma^* \mid S \Rightarrow_G^* w \,\}.
\]
A language is \emph{context-free} if it is generated by some context-free grammar.

\begin{example}
{\em Let $\Sigma = \{a,b\}$. 
The language $\{a^n b^n \mid n \ge 0\}$ is context-free. It is generated by the grammar
$G = (\{S\}, \{a,b\}, R, S)$ with production rule
$S \;\to\; aSb \;\mid\; \epsilon$, where $\epsilon$ is the empty word. 
In contrast, the language $\{a^n b^n c^n \mid n \ge 0\}$ is not context-free.
}
\qed
\end{example}

\subsection{Non-computable sample complexity}

We now state our main result for context-free languages.

\begin{theorem}
\label{thm_context}
There is no algorithm that, given two infinite CFGs
$G_1$ and $G_2$, outputs a number $m \in \mathbb{N}$ such that the family
$\{L(G_1), L(G_2)\}$ is $m$-generatable. The algorithm may diverge if either
$L(G_1)$ or $L(G_2)$ is finite.
\end{theorem}

\begin{proof}
By reduction from the halting problem.

\begin{lemma}
\label{lem_turing}
There exists an algorithm that, given a Turing machine $M$ with empty input, constructs
two CFGs $G_1$ and $G_2$ such that, if $M$ halts in exactly $t$
computation steps, then
\[
|L(G_1) \cap L(G_2)| = 2^t.
\]
\end{lemma}

Assuming Lemma~\ref{lem_turing}, we show that the existence of the algorithm postulated in
Theorem~\ref{thm_context} would imply decidability of the halting problem.
Given a Turing machine $M$, we construct CFGs $G_1$ and $G_2$ as in
Lemma~\ref{lem_turing}. It is a classical result that finiteness of context-free languages is decidable: given a
context-free grammar $G$, one can determine whether $L(G)$ is finite and, if so, compute its
cardinality (see, e.g.,~\cite{Ginsburg1966CFL}).

We distinguish three cases:

\begin{enumerate}
\item If $L(G_1)$ is finite, choose $t$ such that
$2^t > |L(G_1)| \ge |L(G_1) \cap L(G_2)|$. Then $M$ cannot halt in $t$ or more steps. We
simulate $M$ for $t-1$ steps and conclude that it does not halt if no halting configuration
is reached.

\item The case where $L(G_2)$ is finite is symmetric.

\item If both $L(G_1)$ and $L(G_2)$ are infinite, apply the algorithm from
Theorem~\ref{thm_context} to obtain $m$ such that
$\{L(G_1), L(G_2)\}$ is $m$-generatable. Choose $t$ with $m \le 2^t$.
By Proposition~\ref{prop_char}, if $M$ were to halt in $s \ge t$ steps, then
$|L(G_1) \cap L(G_2)| = 2^s \ge 2^t \ge m$, contradicting $m$-generatability. Hence $M$
cannot halt in $t$ steps, and it suffices to simulate $M$ for $t-1$ steps.
\end{enumerate}
Therefore, the assumed algorithm cannot exist.
\end{proof}

\begin{proof}[Proof of Lemma~\ref{lem_turing}]
Assume that $M$ halts in exactly $t$ computation steps, and let
$C_1, \ldots, C_t$ be the sequence of configurations of $M$ (encoded as words over a finite
alphabet in the standard way). We add two fresh delimiter constants
$\#_0$ and $\#_1$.
We construct CFGs 
$G_1$ and $G_2$ such that
$L(G_1) \cap L(G_2)$ consists precisely of the words
\[
C_1 \#_{i_1} C_2^R \#_{i_2} \cdots C_t^R \#_{i_t}
\]
if $t$ is even, and
\[
C_1 \#_{i_1} C_2^R \#_{i_2} \cdots C_t \#_{i_t}
\]
if $t$ is odd, where each $i_j \in \{0,1\}$ and $w^R$ denotes the reversal of $w$.

The CFGs enforce the following conditions:
\begin{itemize}
\item $C_1$ encodes the initial configuration of $M$;
\item $C_t$ encodes the unique accepting configuration;
\item for each $i < t$, the pair $(C_i, C_{i+1})$ represents a valid transition of $M$.
\end{itemize}

The verification of transition correctness is distributed between $G_1$ and $G_2$,
alternating between odd and even indices. Each grammar checks only local consistency
conditions between consecutive configurations, which can be enforced by context-free
productions using standard techniques.

Each choice of the delimiter sequence $(i_1,\ldots,i_t)$ yields a distinct word in
$L(G_1) \cap L(G_2)$, and all such words arise in this way. Hence,
$|L(G_1) \cap L(G_2)| = 2^t$.
\end{proof}

\subsection{Other descriptions for context-free languages}

The lower bound established in Theorem~\ref{thm_context} is robust with respect to the
choice of formalism used to specify context-free languages. In particular, the same
non-computable sample complexity arises when languages are given by \emph{pushdown
automata} (PDAs), that is, finite-state machines equipped with an unbounded stack, since
every CFG can be translated into an equivalent PDA, and vice versa, with
only polynomial overhead (see, e.g.,~\cite{HopcroftUllman2007}).
Moreover, the construction underlying the proof can be carried out using
\emph{deterministic} PDAs. Since deterministic PDAs form a strict subclass of the
context-free languages, this shows that the source of non-computability is not
nondeterminism or excessive expressive power, but already manifests under strong
determinism constraints.

\section{Results for regular languages} 

We now 
consider $m$-generatability for finite families of regular languages specified by
regular expressions or finite automata. We prove that, in the worst case, the sample complexity of uniform
generation is double exponential in the size of the family.

\subsection{Regular expressions and finite automata}

Regular expressions are a standard and widely used formalism for specifying
regular languages. We briefly recall their definition.
Let $\Sigma$ be a finite alphabet. 
The set of \emph{regular expressions} over $\Sigma$ is defined by the grammar
\[
r \;::=\; \emptyset \;\mid\; \varepsilon \;\mid\; a \;\mid\; (r + r) \;\mid\; (rr) \;\mid\; r^*,
\]
where $a$ ranges over $\Sigma$.
Each regular expression $r$ defines a language $L(r) \subseteq \Sigma^*$, where
$\emptyset$ defines $\emptyset$, $\varepsilon$ defines $\{\varepsilon\}$, each $a \in \Sigma$
defines $\{a\}$, $(r+s)$ defines $L(r)\cup L(s)$, $(rs)$ defines
$\{uv \mid u\in L(r),\, v\in L(s)\}$, and $r^*$ defines
$\bigcup_{k\ge 0} L(r^k)$, with $r^0=\varepsilon$ and $r^k = (r r^{k-1})$ for $k \geq 1$.

A language over $\Sigma$ is \emph{regular} if it is defined by some regular expression over
$\Sigma$.

\begin{example}
{\em
The regular language defined by $a^* b^*$ consists of all words over
$\{a,b\}$ in which no occurrence of $a$ appears after an occurrence of $b$. In contrast,
the context-free language $\{a^n b^n \mid n \ge 0\}$ is not regular.
}
\qed
\end{example}

A regular language can alternatively be specified by a finite automaton.
Recall that a {\em non-deterministic finite automaton (NFA)} over an alphabet $\Sigma$ is a tuple ${\cal A} = (Q, q_0, F, \Delta)$, where (a) $Q$ is a finite set of {\em states}, (b) $q_0 \in Q$ is the {\em initial state}, (c) $F \subseteq Q$ is the set of {\em final} states, and (d) $\Delta \subseteq Q \times \Sigma \times Q$ is a set of {\em transition rules}. 

The acceptance of a word $w$ by ${\cal A}$ is defined as follows. 
Assuming that $w = a_1 \ldots a_n$ with $a_i \in \Sigma$ for every $i \in \{1, \ldots, n\}$, a {\em run} of ${\cal A}$ on $w$ is a function $\rho : \{0, 1, \ldots, n\} \to Q$ such that $\rho(0) = q_0$ and $(\rho(i), a_{i+1}, \rho(i+1)) \in \Delta$ for every $i \in \{0, \ldots, n-1\}$. Such a run is {\em accepting} if $\rho(n) \in F$. The language accepted by ${\cal A}$, denoted by $L({\cal A})$, is the set of words $w \in \Sigma^*$ such that there exists an accepting run of ${\cal A}$ on $w$.

An NFA ${\cal A} = (Q, q_0, F, \Delta)$ over an alphabet $\Sigma$ is said to be {\em deterministic}, or a deterministic finite automaton (DFA), if for every $q \in Q$ and $a \in \Sigma$, there exists at most one state $q' \in Q$ such that $(q, a, q') \in \Delta$. Every NFA can be translated into a DFA with an exponential blowup that is unavoidable for some families of NFAs. 

Every regular expression $r$ can be translated in polynomial time into an NFA $\mathcal{A}$
such that $L(r)=L(\mathcal{A})$; moreover, if $r$ has length $n$, then $\mathcal{A}$ has at
most $n+1$ states~\cite{EllulSW05}. Conversely, while every NFA can be converted into an
equivalent regular expression, this transformation may incur an unavoidable exponential
blowup. Thus, regular languages admit equivalent descriptions by regular expressions,
NFAs, or DFAs.

\subsection{Double-exponential sample complexity} 

We now establish our main result on uniform generation for finite families of regular
languages. The bounds we obtain do not depend on the chosen representation and should be
viewed as intrinsic to the class of regular languages. We first prove the result for
languages specified by finite automata, and then extend it to regular expressions.
The latter does not follow directly, since translations from automata to regular
expressions can be exponentially more succinct.

\begin{theorem}
\label{thm_regular}
The following statements hold: 
\begin{enumerate}
    \item[a)]
    Let $s \in \mathbb{N}$, and $\mathcal{F}$ be a finite family of infinite  regular languages over an  alphabet of size $a\ge 2$ such that each $L \in \mathcal{F}$ is accepted by an NFA with at most $s$ states. Then $\mathcal{F}$ is $a^{s^{|\mathcal{F}|}}$-generatable. Moreover, the same result holds if each language in $\mathcal{F}$ is defined by a regular expression of length at most $s-1$.

  \item[b)] There exists $k \in \mathbb{N}$ such that, for every $\ell, n \in\mathbb{N}$ with $\ell \geq 1$ and $n \geq 1$, there exists a family $\mathcal{F}$ of infinite binary regular languages such that $|\mathcal{F}| = 2^\ell$, each $L \in\mathcal{F}$ is accepted by a DFA with at most $k \cdot \ell \cdot n$ states, and $\mathcal{F}$ is not $2^{n^{|\mathcal{F}|}}$-generatable. Moreover, the same result holds if each language in $\mathcal{F}$ is defined by a regular expression of length at most  $k \cdot \ell^2 \cdot n^2$.
\end{enumerate}
\end{theorem}
Denoting by $s = k \cdot \ell \cdot n$ the number of states of the DFAs in item b), we get a lower bound of the form $2^{\Omega(s/\log_2|\mathcal{F}|)^{|\mathcal{F}|}}$, which is very close to the upper bound from item a) for the binary alphabet  as long as $s$ is much bigger than $\log_2|\mathcal{F}|$. In fact, for families of size up to $2^{O(s)}$, we still get a double-exponential lower bound in $|\mathcal{F}|$. It is worth pointing out that the number of different binary regular languages recognizable by DFAs with $s$ states is known to be $s^{\Omega(s)}$. It is thus interesting to see if the double-exponential lower bound can be obtained for $\mathcal{F}$ of this size as well.

\begin{proof}[Proof of Theorem \ref{thm_regular}]
We first establish \emph{a)}.  By the standard product construction, each intersection $\bigcap_{L\in\mathcal{S}} L$,
for $\mathcal{S}\subseteq \mathcal{F}$, is recognizable by a finite automaton with at most $s^{|\mathcal{F}|}$ states. If this intersection is finite, it cannot have a word of length $s^{|\mathcal{F}|}$ or larger. Hence, this intersection has size less than $a^{s^{|\mathcal{F}|}}$. By Proposition \ref{prop_char}, the family is $a^{s^{|\mathcal{F}|}}$-generatable. Moreover, we obtain that the same result holds if each language in $\mathcal{F}$ is defined by a regular expression of length at most $s-1$, since every such regular expression can be translated into an NFA with at most $s$ states.
\medskip 

We now proceed to the proof of \emph{b)}. Let $w\in\{0, 1\}^\ell$. Next, let $*<w$ denote the set of all words in $\{0,1\}^\ell$ that precede $w$ in the lexicographic order. Define $L_w^n$ to be the following language. First, this language will consist only of words whose length is a multiple of $\ell$. We thus will think of words in $L_w^n$ as split into blocks of length $\ell$. We put a word into $L_w^n$ if and only if the sequence of its blocks in \emph{odd} positions satisfies the following: there is no consecutive fragment of this sequence  that has more than $n$ blocks $w$ but no block from $*<w$. In turn, the blocks in even position can be filled arbitrarily.

We show that the family $\mathcal{F} = \{L_w^n \mid w\in\{0, 1\}^\ell\}$ satisfies all the requirements. First, $|\mathcal{F}| = 2^\ell$ by construction. It is easy to see that $L_w^n$ is infinite for every $w\in\{0, 1\}^\ell$ (for instance, it contains all words without $w$-blocks).

We now show $L_w^n$ is recognizable by an $O(\ell\cdot n)$-state DFA for every $w\in\{0,1\}^\ell$. This DFA counts the number of $w$-blocks, resetting it to 0 after any block from $*<w$ and checking that this number never exceeds $n$. We thus need a counter, taking $n + O(1)$ possible values. For each value of the counter, we will have an $O(\ell)$-state subautomaton to process blocks in odd positions (remembering the current value of the counter), and $O(\ell \cdot n)$ states overall. Each of these  subautomata tracks for how long the current block coincides with $w$ when we read it from left to right. It suffices to have  $\ell + O(1)$ states for this -- one per prefix of $w$. In the end, we will be able to decide whether the current block is less than $w$ (in which case we reset the counter to 0), equal to $w$ (in which case we increase the counter) or bigger than $w$ (in which case we do nothing with the counter) in the lexicographic order. 

We now show that $\mathcal{F}$ is not $2^{n^{|\mathcal{F}|}}$-generatable. For that we use Proposition \ref{prop_char}, showing that the intersection of languages in the family:
\begin{equation}
    \label{eq_intersection}
    \bigcap\limits_{w\in\{0, 1\}^\ell} L_w^n
\end{equation}
is finite but has size at least $2^{n^{|\mathcal{F}|}}$.

First we show that \eqref{eq_intersection} is finite. Let $w_0 < w_1 < \ldots < w_{2^\ell - 1}$ the words in $\{0, 1\}^\ell$ going in the lexicographic order. If a word belongs to $L^n_{w_0}$, it has at most $n$ odd $w_0$-blocks. If it additionally belongs to $L^n_{w_1}$, then we have at most $n + 1$ ``spaces'' between $w_0$-blocks, and in each space we can put at most $n$ blocks $w_1$. Thus, in total, we can have at most $(n + 1) n$ blocks $w_1$.

Likewise, assuming that we have obtained a finite upper bound on the number of odd blocks for each word in $*< w$, we get the following upper bound on the number of odd $w$-blocks -- at most $n$ times (the sum of bounds for each word in $*< w$ plus 1). Indeed, in each space between a block from $*<w$ we can put at most $n$ blocks $w$, and the number of spaces is the number of corresponding blocks plus 1. This implies that for words in \eqref{eq_intersection} we have an finite upper bound on the number of occurrences of each word in odd blocks, meaning that all words in these intersection have length at most some finite number.

We now show that size of \eqref{eq_intersection} is at least $2^{n^{|\mathcal{F}|}}$. It is enough to construct a word in \eqref{eq_intersection} with at least $n^{|\mathcal{F}|}$ odd blocks (as we can have the same number even blocks and fill them arbitrarily). We construct such word as follows: we put $n$ blocks $w_0$, then in all the spaces we put $n$ blocks $w_1$, and so on. Each time, the number of blocks increases by a factor at least $n$  -- before each old blocks appear $n$ new ones. Hence, after doing this for all $2^\ell$ words in $\{0, 1\}^\ell$, we will obtain at least $n^{2^\ell} = n^{|\mathcal{F}|}$ blocks.

We have shown that b) holds when each language $L^n_w \in \mathcal{F}$ is given by a DFA. To conclude the proof of the theorem, we need to show how to define each language $L^n_w$ with a regular expression of length $O(\ell^2 \cdot n^2)$. Assume that $w = a_1 \cdots a_\ell$, where $a_i \in \{0,1\}$ for every $i \in \{1, \ldots, \ell\}$. Moreover, assume that $Z$ is the set of positions $i \in \{1, \ldots, \ell\}$ such that $a_i = 0$, and $O$ is the of positions $i \in \{1, \ldots, \ell\}$ such that $a_i = 1$. Then for every $i \in O$, let $\textit{prec}_{i}$ be a regular expression that defines the set of words $w' \in \{0,1\}^\ell$ such that, $w'$ precedes $w$ in the lexicographic order because the symbol in position $i$ of $w'$ is $0$, and $w$, $w'$ have the same symbols in positions $1$ to $i-1$. Formally, this regular expression is defined as follows:
\begin{align*}
    \textit{prec}_i \ = \ a_1 \cdots a_{i-1} 0 \underbrace{(0+1) \cdots (0+1)}_{n-i \text{ times}}. 
\end{align*}
Then the following regular expression defines the set of words in $\{0,1\}^\ell$ that precede $w$ in the lexicographic order:
\begin{align*}
    \textit{prec} \ = \ \sum_{i \in O} \textit{\,prec}_i. 
\end{align*}
In the same way, for every $i \in Z$, let $\textit{follow}_{i}$ be a regular expression that defines the set of words $w' \in \{0,1\}^\ell$ such that, $w'$ follows $w$ in the lexicographic order because the symbol in position $i$ of $w'$ is $1$, and $w$, $w'$ have the same symbols in positions $1$ to $i-1$. Formally, this regular expression is defined as follows:
\begin{align*}
    \textit{follow}_i \ = \ a_1 \cdots a_{i-1} 1 \underbrace{(0+1) \cdots (0+1)}_{n-i \text{ times}}. 
\end{align*}
And then the following regular expression defined the set of words in $\{0,1\}^\ell$ that follow $w$ in the lexicographic order:
\begin{align*}
    \textit{follow} \ = \ \sum_{i \in Z} \textit{\,follow}_i. 
\end{align*}
Now for every $i \in \{0, \ldots, n\}$, define $\textit{block}_i$ as a regular expression that defines the sequences of blocks that contain $i$ times the block $w$. Formally, assuming that $r^0 = \varepsilon$ and $r^i = (r r^{i-1})$ for every $i \geq 1$, we have that:
\begin{align*}
\textit{block}_i \ = \ \textit{follow}^* (w \, \textit{follow}^*)^i. 
\end{align*}
With this notation, the following regular expression defines the language $L^n_w$:
\begin{align*}
r^n_w \ = \ \bigg(\bigg(\sum_{i=0}^n \textit{\,block}_i\bigg)\textit{prec}\bigg)^*.
\end{align*}
To conclude the proof, we need to show that the size of $r_w^n$ is $O(\ell^2 \cdot n^2)$. To see why this is the case, we first note that the size of each $\textit{prec}_i$ is $O(\ell)$, and so the size of $\textit{prec}$ is $O(\ell^2)$. Analogously, we have that the size of $\textit{follow}$ is $O(\ell^2)$. On the other side, the size of $\textit{block}_i$ is $O(\ell^2 \cdot i)$, which is $O(\ell^2 \cdot n)$ if $i \leq n$. Hence, the size of $\sum_{i=0}^n \textit{block}_i$ is $O(\ell^2 \cdot n^2)$, from which we deduce that the size of $r_w^n$ is $O(\ell^2 \cdot n^2)$. This concludes the proof of the theorem.
   \end{proof}

\section{Results for LTT languages}


In the previous section, we showed that uniform generation for regular languages can
require a double-exponential number of examples, both when languages are represented by
regular expressions and by finite automata. 
This naturally raises the question of whether imposing additional structure on regular
languages can reduce the sample complexity of generation. In this section, we address
this question by focusing on \emph{locally threshold testable} (LTT) languages, a robust and
expressive subclass of the regular languages admitting several equivalent
characterizations (see, e.g., \cite{DBLP:journals/corr/PlaceRZ13}). We show that restricting attention to LTT languages indeed leads to a
strict reduction in complexity: the sample complexity of uniform generation becomes
single exponential. While this represents a substantial improvement over the
double-exponential bound for general regular languages, the resulting complexity remains
prohibitively large, showing that even strong structural restrictions are insufficient
to make generation feasible.

\subsection{LTT languages}

Given an alphabet $\Sigma$, a word $w' \in \Sigma^*$ is a
\emph{subword} of a word $w \in \Sigma^*$ if there exist $u,v \in \Sigma^*$ such that $w = u w' v$.
If $u=\varepsilon$, then $w'$ is a \emph{prefix} of $w$, and if $v=\varepsilon$, then $w'$
is a \emph{suffix} of $w$.
A \emph{profile} over $\Sigma$ is a tuple
$P = (\mathit{pr}, \mathit{su}, \mathit{In})$, where $\mathit{pr},\mathit{su} \in \Sigma^*$
are respectively a prefix and a suffix, and $\mathit{In}$ is a finite set of constraints
of the form $(w,\theta t)$, with $w \in \Sigma^*$, $\theta \in \{\leq,<,=,>,\geq\}$, and
$t \in \mathbb{N}$ given in unary.\footnote{This means that $t$ is given as a string $a^t$ for some symbol $a \in \Sigma$. For simplicity, we use the notation $t$ instead of $a^t$.}
The language defined by a profile $P$, denoted $L(P)$, consists of all words
$x \in \Sigma^*$ such that $\mathit{pr}$ is a prefix of $x$, $\mathit{su}$ is a suffix of
$x$, and for every $(w,\theta t) \in \mathit{In}$, the number of occurrences of $w$ as a
subword of $x$ satisfies the constraint~$\theta t$.
Moreover, a language $L \subseteq \Sigma^*$ is \emph{locally threshold testable} (LTT) if
there exists a finite sequence of profiles $(P_1,\ldots,P_n)$ such that
$L \;=\; \bigcup_{i=1}^n L(P_i)$. 
\begin{example}\label{exa-LTT}
	{\em Consider the alphabet $\Sigma=\{1,\ldots,n\}$ and the profile
		$P=(\varepsilon,\varepsilon,\mathit{In})$, where
		\[
		\mathit{In}=\{(1,=1),(2,=1),\ldots,(n,=1)\}.
		\]
		Since $\varepsilon$ is both a prefix and a suffix of every word, the profile imposes no
		constraints on the beginning or end of a word. The constraints in $\mathit{In}$ instead
		require that each symbol $i\in\{1,\ldots,n\}$ occur exactly once. Consequently, $L(P)$
		consists precisely of all permutations of the alphabet $\Sigma$, and hence is a finite
		language of size $n!$.
	} \qed
\end{example}

\subsection{Single-exponential sample complexity}

It is easy to see that for every profile $P$, the language $L(P)$ is accepted by an 
NFA: prefix and suffix constraints are local, and each
counting constraint in $\mathit{In}$ can be implemented by a finite-state counter. Since
regular languages are closed under finite union, every LTT language is regular.
However, translating an LTT profile into an equivalent automaton can incur an exponential
blowup. Combining this translation with the general bounds from
Theorem~\ref{thm_regular} therefore yields a triple-exponential upper bound on the sample
complexity when LTTs are given as profiles, and a double-exponential bound when they are
given as NFAs.

We show that these bounds are not inherent to LTT languages. In fact, every finite family
of LTTs admits uniform generation with a \emph{single-exponential} bound, demonstrating
that the structure of LTT profiles can be exploited to obtain substantially more efficient
generation.

\begin{theorem}
	\label{thm_LLT}
	The following statements hold: 
	\begin{enumerate}
		\item[a)]
		Let $s \in \mathbb{N}$, and $\mathcal{F}$ be a finite family of infinite languages over an  alphabet of size $a\ge 2$ such that each $L \in \mathcal{F}$ is defined by an LTT of size at most $s$. Then $\mathcal{F}$ is $a^{3s|\mathcal{F}|((s|\mathcal{F}|)^2+1)^2}
		$-generatable. 
		
		\item[b)] For every $n \in\mathbb{N}$ with $n \geq 4$, there exists a family $\mathcal{F}$ of two infinite languages over an  alphabet of size $n$ such that  each $L \in\mathcal{F}$ is defined by a profile of length at most $7(n+1)$, and $\mathcal{F}$ is not $2^{n}$-generatable. 
	\end{enumerate}
\end{theorem}

\begin{proof}
	We first establish \emph{b)}. 
	The proof builds on the construction in Example~\ref{exa-LTT}.
	We consider a family of two LTT languages, since generation from a single infinite
	language is always trivial, and nontrivial lower bounds necessarily require at least two
	languages.
	
	Let $\Sigma=\{1,\ldots,n\}$ with $n \geq 4$, $P_1=(\varepsilon,\varepsilon,\textit{In}_1)$ be a profile such that $\textit{In}_1=\{(i,\leq 1) : i \text{ is odd}\} \cup \{(i,\geq 1) : i \text{ is even}\}$, and $P_2=(\varepsilon, \varepsilon, \textit{In}_2)$ be a profile such that $\textit{In}_2=\{(i,\geq 1) : i \text{ is odd}\} \cup \{(i,\leq 1) : i \text{ is even}\}$. Moreover, define $\mathcal{F}=\{L(P_1),L(P_2)\}$. Both $L(P_1)$ and $L(P_2)$ are infinite because $\textit{In}_1$ allows arbitrarily many occurrences of even symbols and $\textit{In}_2$ allows arbitrarily many occurrences of odd symbols. On the other hand, the intersection $|L(P_1) \cap L(P_2)| = n!$ since this intersection is defined by the profile $P=(\varepsilon,\varepsilon,\textit{In})$ with $\textit{In}=\{(1,=1)$, $(2,=1)$, $\ldots$, $(n,=1)\}$, and we know from Example \ref{exa-LTT} that $|L(P)| = n!$. Hence, we deduce from Proposition \ref{prop_char} that $\mathcal{F}$ is not $n!-$generatable. As $2^n \leq n!$ for $n \geq 4$, we conclude that $\mathcal{F}$ is not $2^n-$generatable. To finish, notice that the length of both $P_1$ and $P_2$ is bounded by $7(n+1)$.
	
	We now prove \emph{a)}.  Let $\mathcal{F}=\{L_1,\ldots,L_\ell\}$ be a finite family of infinite languages, where each
	$L_i$ is defined by an LTT of size at most $s$. Write each $L_i$ as a finite union of
	profiles $(P^i_1,\ldots,P^i_{m_i})$.
	To show that $\mathcal{F}$ is $a^{3s\ell((s\ell)^2+1)^2}$-generatable, it suffices by
	Proposition~\ref{prop_char} to prove that for every index set
	$I\subseteq\{1,\ldots,\ell\}$ such that $\bigcap_{i\in I} L_i$ is finite,
	\begin{equation}
		\label{eq-upp-ltt}
		\bigg|\bigcap_{i\in I} L_i\bigg| < a^{3s\ell((s\ell)^2+1)^2}.
	\end{equation}

	\noindent\emph{Step 1: Intersections of profiles.}
	The intersection of two profiles of sizes $n_1$ and $n_2$ can be represented by a profile
	of size at most $n_1+n_2$, or is empty. This follows by taking the longer compatible prefix
	and suffix (if they are compatible), and uniting the sets of counting constraints; if
	prefixes or suffixes are incompatible, the intersection is empty.
	By iterating this construction, the intersection of $r$ profiles of size at most $n$ is
	represented by a profile of size at most $rn$.
	
	\medskip
	\noindent\emph{Step 2: Representing finite intersections.}
	By distributivity of intersection over union, the language
	$\bigcap_{i\in I} L_i$ is defined by an LTT consisting of
	$m=\prod_{i\in I} m_i$ 
	profiles, each obtained as the intersection of one profile from each $L_i$.
	Each such profile has size at most $s|I|\le s\ell$.
	Since $\bigcap_{i\in I} L_i$ is finite by assumption, each of these profiles defines a
	finite language.

	\noindent\emph{Step 3: Bounding word length.}
	We now use the following combinatorial fact.
	
	\begin{lemma}
		\label{lema profile}
		If $P$ is a profile of size $r$ such that $L(P)$ is finite, then every word
		$w\in L(P)$ satisfies
		\[
		|w| < 3r(r^2+1)^2.
		\]
	\end{lemma}

	Applying Lemma~\ref{lema profile} with $r=s\ell$, we conclude that every word
	$w\in\bigcap_{i\in I} L_i$ has length strictly less than
	$3s\ell((s\ell)^2+1)^2$. The proof of Lemma \ref{lema profile} can be found in the appendix. 
	
	\medskip
	\noindent\emph{Step 4: Counting words.}
	The number of words over an alphabet of size $a\ge 2$ of length strictly less than
	$3s\ell((s\ell)^2+1)^2$ is bounded by
	$a^{3s\ell((s\ell)^2+1)^2}$.
	Therefore~\eqref{eq-upp-ltt} holds, completing the proof of~(a).
	
	\medskip
It remains to establish Lemma \ref{lema profile}.

\begin{proof}[Proof of Lemma \ref{lema profile}]
	We require the following ``pumping lemma'' for profiles
	\begin{lemma}
		\label{lema ltt}
		Let $w_1,\ldots,w_m \in \Sigma^*$ with $m \geq 1$, $\ell=\max\{|w_i| \mid 1\leq i \leq m\}$, and $w \in \Sigma^*$ such that $|w| \geq 2(\ell m + 1)^{2}$ and $w$ does not have any of $w_1,\ldots,w_m$ as subword. Then there exist $u,x,v \in \Sigma^*$ such that $w=uxv$, $|x|>\ell$ and the word $uxxv$ also does not have any of $w_1,\ldots,w_m$ as subword.
	\end{lemma}
	
	\begin{proof}
		Let $w \in \Sigma^*$ be a word that satisfies the hypothesis of the lemma. Moreover, let $\pref$ be the set of prefixes of the words $w_1,\ldots,w_m$. Notice that $|\pref| \leq \ell m + 1$.
		
		Assume that $w = w[1] \cdots w[n]$, where $w[i] \in \Sigma$ for every $i \in \{1, \ldots, n\}$. Then define an assignment $\tau :\{1, \ldots, n\} \to \pref$ as follows.
		Take position $i \in \{1,\ldots,n\}$, and let $\pref_i$ be the set of words of the form $w[i-k+1]\cdots w[i]$ such that $k \in \mathbb{N}$ and $w[i-k+1]\cdots w[i] \in \pref$. Observe that $\pref_i \neq \varnothing$ because $\varepsilon \in \pref_i$, and it is a finite set since $|\pref_i| \leq |\pref| \leq \ell m + 1$. Then
		we define $\tau(i)$ as the word in $\pref_i$ with maximum length.
		
		Given that $|w| \geq 2(\ell m+1)^2$ and $|\pref| \leq \ell m$ + 1, there exists an element $p$ in $\pref$ that is assigned to at least $2(\ell m + 1)$ positions (that is, $|\{i \in \{1, \ldots, n\} \mid \tau(i) = p\}| \geq 2(\ell m + 1)$). Let $i_0$ be the minimum position such that $\tau(i_0) =p$, and $i_1$ be the maximum position such that $\tau(i_1) = p$. Then define $x$ as the subword of $w$ between positions $i_0$ and $i_1$, including $w[i_0]$ but excluding $w[i_1]$. Moreover, define $u$ and $v$ as the remaining prefix and suffix of $w$, so that $w = u x v$. This word is depicted in the following figure, where we also show the first and last occurrences of $p$:
		
		\begin{center}
			\includegraphics[width=0.8\linewidth]{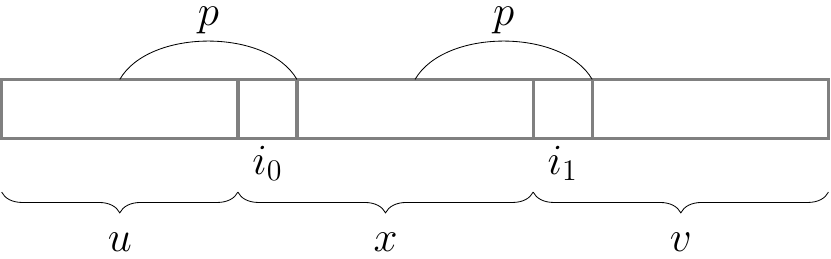}
		\end{center}
		
		We have that $|x| > \ell$ because $x$ contains at least $2(\ell m + 1)$ positions. Hence, it remains to prove that $uxxv$ does not have any of $w_1,\ldots,w_m$ as subword. The word $uxxv$ is depicted in the following figure:

		\begin{center}
			\includegraphics[width=0.8\linewidth]{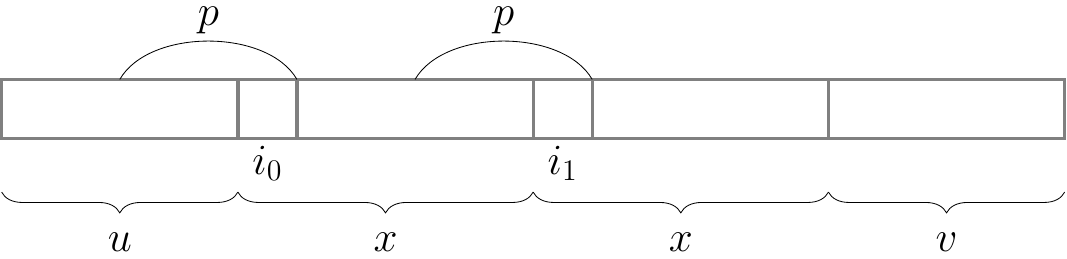}
		\end{center}
	
	Suppose, by contradiction, that there exists $i \in \{1, \ldots, m\}$ such that $w_i$ is a subword of $uxxv$. Given that $w_i$ is not a subword of $w$, it follows that $w_i$ is a subword of $xx$, with a nonempty part lying in the first occurrence of $x$ and a nonempty part lying in the second occurrence of $x$ (as otherwise $w_i$ would be a subword of $ux$ or $xv$, which contradicts our assumption that $w_i$ is not a subword of $w = uxv$).
	This gives rise to two cases for our proof, depending on how $p$ and $w_i$ are related in $uxxv$.
	
	\begin{enumerate}
		\item Assume first that the first symbol of $w_i$ occurs before the first symbol of $p$, as depicted in the following figure:

			\begin{center}
				\includegraphics[width=0.8\linewidth]{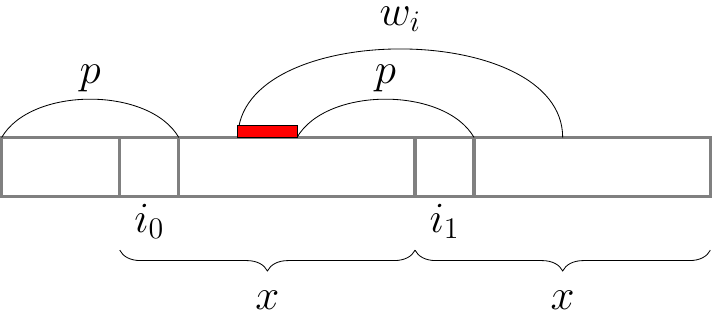}
			\end{center}
			In particular, the red region in the figure is not empty. Notice that in this case, $p$ is a subword of $w_i$, since only the last symbol of $p$ appears in the second occurrence of $x$ (at position $i_1$), and a nonempty part of $w_i$ lies in the second occurrence of $x$.
			Then there exists a prefix $y$ of $w_i$ such that $p$ is a proper suffix of $y$. But this leads to a contradiction since $y \in \pref_{i_1}$, $|p| < |y|$, and~$\tau(i_1) = p$.
			
			\item
			Assume now that the first symbol of $w_i$ does not occur before the first symbol of $p$, as depicted in the following figure:

				\begin{center}
					\includegraphics[width=0.8\linewidth]{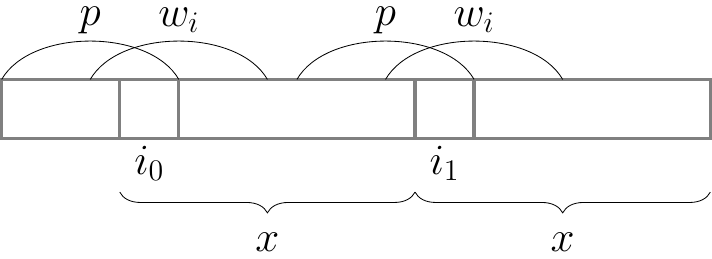}
				\end{center}    
			Then given that the word $x$ occurs twice in $uxxw$, the same words appear at the beginning and at the end of the $uxxw$, as depicted in the figure. But this implies that $w_i$ is a subword of $ux$ since $|w_i| \leq \ell$ and $|x| > \ell$. We conclude that $w_i$ is a subword of $w = uxv$, which contradicts the hypothesis of the lemma.
		\end{enumerate}
		This finishes the proof of the lemma. 
	\end{proof}
	
	We now finish the proof of Lemma \ref{lema profile}.
	Let $P=(\textit{pr},\textit{su},\textit{In})$ be a profile over an alphabet $\Sigma$ that defines a finite language, where $\textit{In}=\{(w_1,\theta_1t_1),\ldots,(w_m,\theta_mt_m)\}$. Moreover, assume that $k \in \{0, \ldots, m\}$ satisfies the following:
	\begin{itemize}
		\item $\theta_i \in \{<, \leq, =\}$ for every $i \in \{1, \ldots, k\}$, and
		
		\item $\theta_i \in \{>, \geq\}$ for every $i \in \{k + 1, \ldots, m\}$.
	\end{itemize}
	Notice that if $k = 0$, then every $\theta_i$ is either $>$ or $\geq$, and if $k = m$, then every $\theta_i$ is either $<$ or $\leq$ or $=$. 
	
	If $m = 0$ or $k = 0$, then $L(P)$ is infinite, and we obtain a contradiction as we assume that $L(P)$ is finite. Hence, we have that $m \geq 1$ and $k \in \{1, \ldots, m\}$, and we define $\ell = \max \{ |w_i| \mid 1 \leq i \leq k\}$. Moreover, we assume that $s$ is the length of profile $P$.
	
	If $L(P) = \emptyset$, then the lemma trivially holds. Hence, assume that $L(P) \neq \emptyset$, and let $w$ be a word of $L(P)$ of maximum length (such a word exists since $L(P)$ is finite). For the sake of contradiction, assume that $|w| \geq 3 (s^2 + 1)^2 s$. Then, considering that:
	\begin{align*}
		|\textit{pr}| + |\textit{su}| + 
		\bigg(\sum_{i=1}^m |w_i| (|t_i| + 1)\bigg) \ &\leq \ s^2,\\
		2(\ell k + 1)^2 \cdot \bigg(1+m+\sum_{i=1}^m |t_i|\bigg) \ &\leq \ 2(s^2 + 1)^2 s,\\
		s^2 + 2(s^2 + 1)^2 s \ &\leq \ 3 (s^2 + 1)^2 s,
	\end{align*}
	we conclude that:
	\begin{multline}
		\label{eq-pol-prof}
		|\textit{pr}| + |\textit{su}| + \bigg(\sum_{i=1}^m |w_i| (|t_i| + 1)\bigg) \ + \\ 2(\ell k + 1)^2 \cdot \bigg(1+m+\sum_{i=1}^m |t_i|\bigg) \ \leq \ |w|.
	\end{multline}
	Therefore, we conclude that there exists $u, w', v \in \Sigma^*$ such that:
	\begin{itemize}
		\item $w = u w' v$,
		
		\item $\textit{pr}$ is a prefix of $u$ and 
		$\textit{su}$ is a suffix of $v$,
		
		\item $|w'| \geq 2(\ell k + 1)^2$,
		
		\item $w'$ does not have any of $w_1$, $\ldots$, $w_k$ as subword,
		
		\item the number of occurrences of $w_i$ as a subword of $u$ or $v$ is at least $t_i + 1$, for every $i \in \{k+1, \ldots, m\}$ such that $\theta_i = \ >$, and
		
		\item the number of occurrences of $w_i$ as a subword of $u$ or $v$ is at least $t_i$, for every $i \in \{k+1, \ldots, m\}$ such that~$\theta_i = \ \geq$.
	\end{itemize}
	By lemma \ref{lema ltt}, there exist $u', x, v' \in \Sigma^*$ such that $w' = u'xv'$, $|x| > \ell$ and the word $u'xxv'$ does not have any of $w_1$, $\ldots$, $w_k$ as subword. Hence, we conclude that $uu'xxv'v \in L(P)$ by the previous conditions. But then we obtain a contradiction since $|uu'xxv'v| > |w|$ (given that $w = uw'v = uu'xv'v$ and $|x| > 0$) and $w$ is an element of $L(P)$ of maximum length.
\end{proof}
	
\end{proof}

\section{Results for non-erasing pattern languages}

Let $\mathcal{X}$ be a countably infinite set of \emph{variables}, and use uppercase
letters $X,Y,\ldots$ to denote its elements. A \emph{pattern} over a finite alphabet
$\Sigma$, assumed disjoint from $\mathcal{X}$, is a word in $(\Sigma \cup \mathcal{X})^*$.

A \emph{valuation} over $\Sigma$ is a mapping $\mu : \mathcal{X} \to \Sigma^*$ assigning to
each variable $X \in \mathcal{X}$ a word $\mu(X) \in \Sigma^*$. The valuation is
\emph{non-erasing} if $\mu(X)$ is a nonempty word 
for every $X \in \cal X$. If $p$ is a pattern and $\mu$ a
valuation, we write $\mu(p)$ for the word in $\Sigma^*$ obtained by simultaneously
replacing each occurrence of a variable $X$ in $p$ with the word $\mu(X)$.

Each pattern $p$ over $\Sigma$ defines a \emph{non-erasing pattern language} over $\Sigma$
by
\[
L(p) \;=\; \{\mu(p) \mid \mu \text{ is a non-erasing valuation over } \Sigma\}.
\]
As shown in the next example, non-erasing pattern languages are incomparable with
context-free, regular, and LTT languages.

\begin{example}
{\em Consider the pattern $p = XX$. Then $L(p)$ consists of all words of the form $ww$, where
$w$ is a nonempty word over $\Sigma$. This language is not context-free, and hence not
regular. In turn, the regular language $a^*b^*$, which is even LTT, is not a non-erasing
pattern language.}
\qed
\end{example}

Non-erasing pattern languages are known to have favorable properties for language
identification in the limit under Gold’s model \cite{DBLP:journals/iandc/Angluin80}. We show next that, despite this, finite
families of such languages need not admit feasible language generation in the limit.

\begin{theorem}
\label{thm_patterns}
For every $n \in \mathbb{N}$, there exists a family $\{p_1,p_2\}$ of two 
infinite non-erasing pattern languages over the alphabet $\Sigma = \{a,b\}$ satisfying the following: $p_1$ is defined by a pattern of size $44n$, $p_2$ is defined by a pattern of size 2, and $\{p_1,p_2\}$ is not 
$2^n$-generatable.
\end{theorem}

\begin{proof}
By Proposition~\ref{prop_char}, it suffices to construct patterns $p_1,p_2$ such that
$L(p_1)$ and $L(p_2)$ are infinite and
\(
|L(p_1)\cap L(p_2)| = 2^n.
\)
Consider the following patterns of size 22:
\begin{eqnarray*}
q \ &= \ &XaXbXaabbabaXbabaabbab,\\
r \ &= \ &abaabbabaXbabaabbXaXbX.
\end{eqnarray*}
It has been shown by \citeauthor{DBLP:journals/ijfcs/NowotkaS18} that
\(
|L(q)\cap L(r)|=2.
\)
For each $i\in[n]$, let $q_i$ be the pattern obtained from $q$ by replacing every
occurrence of $X$ with a fresh variable $X_i$; define $r_i$ analogously (use the same variable $X_i$ in $r_i$ as in $q_i$). Now set
$$
p_1 \ = \ q_1 q_2 \cdots q_n r_1 r_2 \cdots r_n, \ \ \ \ \ \ 
p_2 \ = \ YY.
$$
Clearly, $L(p_1)$ and $L(p_2)$ are infinite. Moreover, since $p_2$ forces words of the
form $ww$ and the two halves of $p_1$ have the same length (because $|q|=|r|$), any word
in $L(p_1)\cap L(p_2)$ must be of the form
\[
w w
\qquad\text{with}\qquad
w \in L(q_1 q_2\cdots q_n)\cap L(r_1 r_2\cdots r_n).
\]
Thus, $w w \in L(p_1)\cap L(p_2)$ iff there exists a non-erasing valuation $\mu$ such that
$\mu(q_1 q_2\cdots q_n)=\mu(r_1 r_2\cdots r_n)$. 
Since $|q_i|=|r_i|$ for each $i$, equality of the concatenations implies blockwise
equality:
$\mu(q_i)=\mu(r_i)$ 
for all $i\in[n]$. 
Because $q_i$ and $r_i$ share \emph{only} the variable $X_i$, the condition
$\mu(q_i)=\mu(r_i)$ depends only on the value of $\mu(X_i)$, and the choices for
different indices $i$ are independent. Furthermore, by the result of
\citeauthor{DBLP:journals/ijfcs/NowotkaS18}, for each $i$ there are exactly two
non-erasing valuations for $X_i$ that satisfy $\mu(q_i)=\mu(r_i)$, i.e.,
\(
|L(q_i)\cap L(r_i)|=2.
\)
It follows that there are exactly $2^n$ choices for $(\mu(X_1),\ldots,\mu(X_n))$ that
simultaneously satisfy $\mu(q_i)=\mu(r_i)$ for all $i$, and each such choice yields a
distinct word in $L(p_1)\cap L(p_2)$ (already the $i$th block differs whenever the choice
for $X_i$ differs). Hence,
$|L(p_1)\cap L(p_2)| \;=\; \prod_{i=1}^n |L(q_i)\cap L(r_i)| \;=\; 2^n$. 
By Proposition~\ref{prop_char}, 
the family $\{p_1,p_2\}$ is not
$2^n$-generatable.

Finally, $p_2$ has size 2, and $p_1$ has size $44 n$ as it is a concatenation of $2n$ patterns of size 22.
\end{proof} 

\section{Final Remarks}

\paragraph{Implications for learning}
Kleinberg and Mullainathan show that language generation is possible in principle even
under worst-case presentations of positive data, without relying on statistical
assumptions~\cite{kleinberg2024language}. Our results refine this picture by showing that
the feasibility of generation depends critically on the structure of the language family.
In particular, large finite intersections among languages can force the number of
required examples to be enormous or even non-computable. At the same time, this suggests
why generation may be tractable in practice: if such intersections are small for the
language structures encountered by models, efficient generation may still be possible.
The key question, therefore, is not only whether generation is possible, but under which
structural conditions it can be achieved efficiently.

\paragraph{Connections to formal language theory.}
Beyond their implications for learning theory, our results also connect to classical
problems in formal language theory. In particular, the lower bounds we obtain are
governed by the size of the largest finite intersection of languages in the family (see
Proposition~\ref{prop_char}). While this quantity has received little direct attention,
its analysis relies on techniques similar to those used in studying the complexity of
deciding whether the intersection of a finite 
collection of languages is nonempty
\cite{DBLP:conf/focs/Kozen77}. From this perspective, language generation in the limit
offers a new lens on the complexity of language intersection.

	\section*{Acknowledgements} 

Barcelo and Kozachinskiy are funded by the National Center for Artificial
Intelligence CENIA FB210017, Basal ANID. Barcelo is also funded by ANID Millennium Science Initiative Program Code ICN17002. Kozachinskiy is further supported by ANID Fondecyt Iniciacion grant
11250060. Cofr\'e is supported by the Faculty of Mathematics at Pontificia Universidad Cat\'olica de Chile through the master's scholarship BMA.


\newpage

\appendix


\begin{thebibliography}{17}
	\providecommand{\natexlab}[1]{#1}
	\providecommand{\url}[1]{\texttt{#1}}
	\expandafter\ifx\csname urlstyle\endcsname\relax
	\providecommand{\doi}[1]{doi: #1}\else
	\providecommand{\doi}{doi: \begingroup \urlstyle{rm}\Url}\fi
	
	\bibitem[Angluin(1980{\natexlab{a}})]{DBLP:journals/iandc/Angluin80}
	Angluin, D.
	\newblock Inductive inference of formal languages from positive data.
	\newblock \emph{Inf. Control.}, 45\penalty0 (2):\penalty0 117--135,
	1980{\natexlab{a}}.
	
	\bibitem[Angluin(1980{\natexlab{b}})]{DBLP:journals/jcss/Angluin80}
	Angluin, D.
	\newblock Finding patterns common to a set of strings.
	\newblock \emph{J. Comput. Syst. Sci.}, 21\penalty0 (1):\penalty0 46--62,
	1980{\natexlab{b}}.
	
	\bibitem[Ellul et~al.(2005)Ellul, Krawetz, Shallit, and Wang]{EllulSW05}
	Ellul, K., Krawetz, B., Shallit, J.~O., and Wang, M.
	\newblock Regular expressions: New results and open problems.
	\newblock \emph{J. Autom. Lang. Comb.}, 10\penalty0 (4):\penalty0 407--437,
	2005.
	
	\bibitem[Ginsburg(1966)]{Ginsburg1966CFL}
	Ginsburg, S.
	\newblock \emph{The Mathematical Theory of Context-Free Languages}.
	\newblock McGraw-Hill Book Company, New York, 1966.
	\newblock ISBN 9780070232808.
	\newblock 232~pp.
	
	\bibitem[Gold(1967)]{DBLP:journals/iandc/Gold67}
	Gold, E.~M.
	\newblock Language identification in the limit.
	\newblock \emph{Inf. Control.}, 10\penalty0 (5):\penalty0 447--474, 1967.
	
	\bibitem[Hanneke et~al.(2025)Hanneke, Karbasi, Mehrotra, and
	Velegkas]{DBLP:journals/corr/abs-2506-18642}
	Hanneke, S., Karbasi, A., Mehrotra, A., and Velegkas, G.
	\newblock On union-closedness of language generation.
	\newblock \emph{CoRR}, abs/2506.18642, 2025.
	
	\bibitem[Hopcroft et~al.(2007)Hopcroft, Motwani, and
	Ullman]{HopcroftUllman2007}
	Hopcroft, J.~E., Motwani, R., and Ullman, J.~D.
	\newblock \emph{Introduction to Automata Theory, Languages, and Computation}.
	\newblock Pearson Education, 3rd edition, 2007.
	
	\bibitem[Kalavasis et~al.(2025)Kalavasis, Mehrotra, and
	Velegkas]{DBLP:conf/stoc/KalavasisMV25}
	Kalavasis, A., Mehrotra, A., and Velegkas, G.
	\newblock On the limits of language generation: Trade-offs between
	hallucination and mode-collapse.
	\newblock In \emph{STOC}, pp.\  1732--1743. {ACM}, 2025.
	
	\bibitem[Kleinberg \& Mullainathan(2024)Kleinberg and
	Mullainathan]{kleinberg2024language}
	Kleinberg, J. and Mullainathan, S.
	\newblock Language generation in the limit.
	\newblock \emph{Advances in Neural Information Processing Systems},
	37:\penalty0 66058--66079, 2024.
	
	\bibitem[Kleinberg \& Wei(2025)Kleinberg and
	Wei]{DBLP:journals/corr/abs-2504-14370}
	Kleinberg, J.~M. and Wei, F.
	\newblock Density measures for language generation.
	\newblock \emph{CoRR}, abs/2504.14370, 2025.
	
	\bibitem[Kozen(1977)]{DBLP:conf/focs/Kozen77}
	Kozen, D.
	\newblock Lower bounds for natural proof systems.
	\newblock In \emph{FOCS}, pp.\  254--266, 1977.
	
	\bibitem[Nowotka \& Saarela(2018)Nowotka and
	Saarela]{DBLP:journals/ijfcs/NowotkaS18}
	Nowotka, D. and Saarela, A.
	\newblock One-variable word equations and three-variable constant-free word
	equations.
	\newblock \emph{Int. J. Found. Comput. Sci.}, 29\penalty0 (5):\penalty0
	935--950, 2018.
	
	\bibitem[Peale et~al.(2025)Peale, Raman, and
	Reingold]{DBLP:journals/corr/abs-2505-21819}
	Peale, C., Raman, V., and Reingold, O.
	\newblock Representative language generation.
	\newblock In Singh, A., Fazel, M., Hsu, D., Lacoste-Julien, S., Berkenkamp, F.,
	Maharaj, T., Wagstaff, K., and Zhu, J. (eds.), \emph{Proceedings of the 42nd
		International Conference on Machine Learning}, volume 267 of
	\emph{Proceedings of Machine Learning Research}, pp.\  48518--48541. PMLR,
	13--19 Jul 2025.
	\newblock URL \url{https://proceedings.mlr.press/v267/peale25a.html}.
	
	\bibitem[Pearl(2021)]{Pearl2021PoS}
	Pearl, L.
	\newblock Poverty of the stimulus without tears.
	\newblock \emph{Language Learning}, 71\penalty0 (1):\penalty0 231--272, 2021.
	\newblock \doi{10.1111/lang.12428}.
	
	\bibitem[Pinker(2013)]{pinker2013learnability}
	Pinker, S.
	\newblock \emph{Learnability and Cognition: The Acquisition of Argument
		Structure}.
	\newblock MIT Press, 2013.
	
	\bibitem[Place et~al.(2014)Place, van Rooijen, and
	Zeitoun]{DBLP:journals/corr/PlaceRZ13}
	Place, T., van Rooijen, L., and Zeitoun, M.
	\newblock Separating regular languages by locally testable and locally
	threshold testable languages.
	\newblock \emph{Log. Methods Comput. Sci.}, 10\penalty0 (3), 2014.
	
	\bibitem[Raman et~al.(2025)Raman, Li, and Tewari]{li2024generation}
	Raman, V., Li, J., and Tewari, A.
	\newblock Generation through the lens of learning theory.
	\newblock In Haghtalab, N. and Moitra, A. (eds.), \emph{COLT}, pp.\
	4740--4776, 2025.
	
\end{thebibliography}
\end{document}